\DeclarePairedDelimiterX{\inner}[2]{\langle}{\rangle}{#1, #2}
\newtheorem{definition}{Definition}
\newtheorem{theorem}{Theorem}
\newtheorem{corollary}{Corollary}
\newcounter{GittaCounter}
\newcounter{HolgerCounter}
\newcounter{PhilippCounter}
\newcommand*\widebar[1]{\@ifnextchar^{{\wide@bar{#1}{0}}}{\wide@bar{#1}{1}}}
\title{Robust identifiability for symbolic recovery of differential equations
}
\author{
  Hillary Hauger* \\
  Ludwig-Maximilians-Universit\"at M\"unchen \\
  Munich \\
  Germany\\
  \texttt{hillary.hauger@yahoo.com} \\
   \And
  Philipp Scholl* \\
  Ludwig-Maximilians-Universit\"at M\"unchen \\
  Munich \\
  Germany\\
  \texttt{scholl@math.lmu.de} \\
   \And
  Gitta Kutyniok \\
  Ludwig-Maximilians-Universit\"at M\"unchen \\
  University of Troms\o{} \\
  DLR-German Aerospace Center \\
  Munich Center for Machine Learning (MCML) \\
  Munich \\
  Germany\\
  \texttt{kuytniok@math.lmu.de} \\
}
\begin{document}
\maketitle

\begin{abstract}
Recent advancements in machine learning have transformed the discovery of physical laws, moving from manual derivation to data-driven methods that simultaneously learn both the structure and parameters of governing equations. This shift introduces new challenges regarding the validity of the discovered equations, particularly concerning their uniqueness and, hence, identifiability. While the issue of non-uniqueness has been well-studied in the context of parameter estimation, it remains underexplored for algorithms that recover both structure and parameters simultaneously. Early studies have primarily focused on idealized scenarios with perfect, noise-free data. In contrast, this paper investigates how noise influences the uniqueness and identifiability of physical laws governed by partial differential equations (PDEs). We develop a comprehensive mathematical framework to analyze the uniqueness of PDEs in the presence of noise and introduce new algorithms that account for noise, providing thresholds to assess uniqueness and identifying situations where excessive noise hinders reliable conclusions. Numerical experiments demonstrate the effectiveness of these algorithms in detecting uniqueness despite the presence of noise.
\end{abstract}

\keywords{machine learning \and physical law learning \and identifiability \and symbolic recovery of differential equations.}

\section{Introduction}

\let\thefootnote\relax\footnotetext{* Equal contribution}
Motivated by the success of symbolic regression algorithms \citep{schmidt2009lipson, xu2020, petersen2019deep, udrescu2020ai, scholl2023parfam, cranmer2023interpretable}, which aim to learn interpretable, symbolic models for regression problems, there has been an increasing interest in learning governing equations symbolically, without predefined parametric models \citep{bongard2007automated, Brunton2016, Rudy2017DatadrivenDO, kaheman2020sindy, chen2021physics, Hasan2020}. To reliably determine the governing equation describing the data, it is crucial to identify whether the data corresponds to a unique governing equation since it is non-identifiable otherwise. This has been extensively studied for classical parameter estimation methods \citep{pohjanpalo1978system, VAJDA1984, cobelli1980parameter, walter1982global}.\citet{scholl2023icassp, scholl2023welldefinedness} introduced a mathematical framework for determining the identifiability for wide classes of differential equations, necessary for the symbolic recovery of differential equations. However, this framework does not address the influence of noise. Since real-life data is almost always contaminated with noise, it is crucial to understand its impact on the identifiability of physical laws. Therefore, we introduce extensions of the theoretical results from \citet{scholl2023icassp, scholl2023welldefinedness} which incorporate noise and yield thresholds that we use for new algorithms that can be applied in practice to infer the uniqueness and, therefore, the identifiability of a learned differential equation.

\section{Background}
We start with defining the uniqueness of a differential equation.
\begin{definition}[Uniqueness \citep{scholl2023icassp, scholl2023welldefinedness}]
 Let $u: U \rightarrow \mathbb{R}$ be a differentiable function on the open set $U \subset\mathbb{R}^k$. Let each
$g_1, ..., g_k: U \rightarrow \mathbb{R}$ be either a projection on one of the coordinates, any derivative of u that exists, or the function $u$.
Denote $G = (g_1, ..., g_k) : U \rightarrow \mathbb{R}^k$. Let $V$ be a set of functions which map from $ \mathbb{R}^{k}$ to $ \mathbb{R}$ and $F \in V$ such that for all $(t,x) \in U$
\begin{equation}
    \frac{\partial u}{\partial t}(t,x) = F(G(t,x)).
    \label{eq:pde_def}
\end{equation}
We say that the function u solves a unique PDE described by functions in $V$ for $G$  if F is the unique function in $V$ such that Equation \ref{eq:pde_def} holds.
\end{definition}
In the following, we will consider three different kinds of PDEs: We call a PDE linear if $F$ in Equation \ref{eq:pde_def} is linear, algebraic if $F$ is an algebraic function, and analytic if $F$ is an analytic function.
\cite{scholl2023icassp, scholl2023welldefinedness} proved multiple conditions for the uniqueness of a PDE. In practice, the most important conditions are: (1) a linear PDE is unique iff $G$ has full rank (i.e., the coordinates $g_1,...,g_k$ are linearly independent); (2) an algebraic PDE is unique for an algebraic solution $u$ iff there is at least one point $(t,x)\in\mathbb{R}^{m+1}$ such that $J_G(t,x)$ has full rank; (3) an analytic PDE is unique for a continuously differentiable solution $u$ iff there is at least one point $(t,x)\in\mathbb{R}^{m+1}$ such that $J_G(t,x)$ has full rank. Based on these results, they introduced two algorithms to identify uniqueness: Stable Feature Rank Computation (S-FRanCo) (for linear PDEs) and Jacobian Rank Computation (JRC) (for algebraic and analytic PDEs), which utilize finite differences and singular value decomposition to assess the rank of $G$ and $J_G$.
However, these algorithms lack mathematical guarantees on noisy data. In the following, we theoretically analyze the effect of noise on identifying uniqueness and use the new insights to derive new algorithms for determining identifiability.
\section{Methods}
\label{sec:methods}
To understand the impact of noise on identifying the uniqueness of a PDE, we first examine its influence on the basic methods, singular values (SVs) and finite differences (FD), used in S-FRanCo and JRC \citep{scholl2023icassp, scholl2023welldefinedness} in Subsection~\ref{sec:svs-fd}. We then derive new bounds for identifying the uniqueness of PDEs for noisy functions $u$ in Subsections~\ref{sec:nr-franco} and \ref{sec:nr-jrc}.


\subsection{Singular Values and Finite Differences} \label{sec:svs-fd}
We first cite the following theorem which bounds the influence of noise on $\sigma_k(A)$, the $k$-th SV of $A \in \mathbb{R}^{m \times n}$.
\begin{theorem}[\citep{Weyl_1912, Horn_Johnson_2012}] 
\label{theorem:mirksy}
If $A,E \in \mathbb{R}^{m \times n}$ are two arbitrary matrices, then
$|\sigma_k(A+E) - \sigma_k(A)| \leq \|E\|_F$ for all  $k=1,...,\min\{m, n\}$.
\end{theorem}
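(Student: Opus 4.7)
The plan is to prove the bound via the Courant--Fischer min--max characterization of singular values and then dominate the spectral norm by the Frobenius norm. Recall that for any matrix $M \in \mathbb{R}^{m \times n}$ and $k \in \{1, \dots, \min\{m,n\}\}$, one has
\begin{equation*}
    \sigma_k(M) = \max_{\substack{S \sse \mathbb{R}^n \\ \dim S = k}} \min_{\substack{x \in S \\ \|x\|=1}} \|Mx\|.
\end{equation*}
I will treat this characterization as the starting point; the argument then reduces to a careful application of the triangle inequality on the inner quantity $\|(A+E)x\|$.

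First, I would fix an arbitrary $k$-dimensional subspace $S \sse \mathbb{R}^n$ and any unit vector $x \in S$. By the triangle inequality,
\begin{equation*}
    \|(A+E)x\| \geq \|Ax\| - \|Ex\| \geq \|Ax\| - \|E\|_2,
\end{equation*}
using that $\|Ex\| \leq \|E\|_2$ for unit $x$. Taking the minimum over unit $x \in S$ on both sides and then the maximum over $k$-dimensional subspaces $S$ yields
\begin{equation*}
    \sigma_k(A+E) \geq \sigma_k(A) - \|E\|_2.
\end{equation*}
Swapping the roles of $A$ and $A+E$ (writing $A = (A+E) + (-E)$) gives the symmetric inequality $\sigma_k(A) \geq \sigma_k(A+E) - \|E\|_2$, so that $|\sigma_k(A+E) - \sigma_k(A)| \leq \|E\|_2$.

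The final step is the standard comparison $\|E\|_2 \leq \|E\|_F$, which follows from $\|E\|_2^2 = \sigma_1(E)^2 \leq \sum_j \sigma_j(E)^2 = \|E\|_F^2$. Concatenating the inequalities gives the claimed bound. I do not anticipate a genuine obstacle here: the argument is a textbook application of min--max, and the only subtlety worth flagging is the direction of the triangle inequality one uses when bounding $\|(A+E)x\|$ from below. The looser Frobenius bound is what makes the statement convenient to cite later when the error term $E$ arises from finite differences or additive noise, where one controls entrywise magnitudes rather than spectral norms directly.
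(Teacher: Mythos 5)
Your argument is correct. Note that the paper does not actually prove this statement --- it is imported as a citation to Weyl and to Horn--Johnson --- so there is no in-paper proof to compare against; what you have written is the standard textbook derivation of exactly this result. The two-step structure (Courant--Fischer min--max plus the triangle inequality to get the sharper spectral-norm perturbation bound $|\sigma_k(A+E)-\sigma_k(A)|\leq\|E\|_2$, then the relaxation $\|E\|_2\leq\|E\|_F$) is sound, the symmetrization via $A=(A+E)+(-E)$ correctly yields the absolute value, and the monotonicity of $\min$ and $\max$ under pointwise inequalities is applied properly. Your closing remark is also the right one to flag: the Frobenius-norm form is weaker than the spectral-norm form, but it is the version the paper needs downstream in Corollary~\ref{cor:sing_nonsing_bounds} because the noise and finite-difference errors there are controlled entrywise.
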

As the next step, we derive error bounds for the calculation of the derivative via the central FD formula. Consider an $l$ times continuously differentiable function $u:~\mathbb{R} \rightarrow~ \mathbb{R}$. In this section, we only consider one-dimensional input, but this can easily be extended to multiple dimensions by fixing the other coordinates. 
In the following let $x_k = x_0+kh, k = 1,...,n,$ denote the equispaced data points for $x_0 \in \mathbb{R}$, $h>0$, and even FD order $n\in\mathbb{N}$. Let $\tilde{u}(x)=u(x)+e(x)$ with bounded noise $e:\mathbb{R}\rightarrow\mathbb{R}$, $\|e\|_{\infty}<\epsilon$, and let $p_n(u,x) = \sum_{k=0}^nu(x_k)L_{n,k}(x)$, where $L_{n,k}(x) =\prod_{i=0,i\neq k}^n \frac{x-x_i}{x_k-x_i}$ are the Lagrange coefficients~\citep{jordan1960calculus}. Then, $p_n^{(l)}(u,x_{n/2})$ is the $l^{th}$ derivative computed with central FD order $n$ at $x_{n/2}$. 
To compute the distance between $u^{(l)}$ and $p_n^{(l)}(\tilde{u},x)$, we utilize the approximation bounds on Lagrange polynomials $u(x) = p(u,x) + r_n(u,x)$~\citep{Burden1989},
where $r_n(u, x)=\frac{u^{(n+1)}(\xi (x))}{(n+1)!}\prod_{k=0}^n(x-x_k)$, for some $x \in [x_0,x_n] $ and $\xi(x)\in [x_0,x_n]$. This yields the bound
\begin{align}
    | u^{(l)}(x) - p_n^{(l)}(\tilde{u},x) | \\
     \leq | u^{(l)}(x) - p_n^{(l)}(u,x) | + |p_n^{(l)}(e,x) | \\
    \leq |r_n^{(l)}(u,x)| + \epsilon\sum_{k=0}^n|L_{n,k}^{(l)}(x)| \text{.}
    \label{eq:fd_error_bound}
\end{align}
Note that $|r_n^{(l)}(u,x)|  = O(h^n)$ and $\epsilon\sum_{k=0}^n|L_{n,k}^{(l)}(x)| = O ( h^{-l})$ for $x\in \{x_k\}_{k=1}^n$ and $h\rightarrow 0$ \citep{Gautschi_2012}, showing that the error due to FD converges to 0 for small $h$, while the error due to the noise $e$ converges to infinity. By bounding $|r_n^{(l)}(u,x_{n/2})|$ and $|L_{n,k}^{(l)}(u,x_{n/2})|$ we can upper bound the error caused by FD on $u^{(l)}$ with some function $e_{fd}$ depending on $u,l,n,h,\epsilon$:
 \begin{equation}
    \| u^{(l)}(\cdot) - p_n^{(l)}(\tilde{u},\cdot) \|_{\infty} \leq e_{fd}(u,l,n,h,\epsilon).
\end{equation}
The following shows $e_{fd}$ for the first derivative of $u$:
\begin{align}
    e_{fd}(u,1,n,h,\epsilon) &= h^n \frac{ \|u^{n+1}\|_{\infty} (\frac{n}{2}!)^2}{(n+1)!}\\ 
    &+ \frac{\epsilon}{h} \sum_{k=0,k\neq\frac{n}{2}} \frac{\prod_{i=0,i\neq k,\frac{n}{2}}^n (\frac{n}{2}-i)}{\prod_{i=0,i\neq k}^n{(k-i)}} \text{.}
\end{align}   
We have implemented $e_{fd}$ for derivatives up to the third order with the assumption that we can bound 
$\|u^{n+k}\|_{\infty} \leq C_u$ and $\|\xi^{k}\|_{\infty} \leq C_{\xi}$, for $k = 1,..,l$, with some constant $C_u, C_{\xi} >0$.
To classify the uniqueness of a PDE, we consider the ratio of the smallest SV to the largest SV, which is scale-invariant as opposed to S-FRanCo \citep{scholl2023welldefinedness} which only uses the smallest SV.
We define the reciprocal of this condition as $\rho: \mathbb{R}^{m \times n}/ \{0\} \rightarrow [0,1], \rho(A) = \frac{\sigma_n(A)}{\sigma_1(A)}$.
The following corollary provides upper and lower bounds for identifying singular and non-singular matrices, which are used later to identify the uniqueness of PDEs. 
\begin{corollary}
    \label{cor:sing_nonsing_bounds}
    Let $\Tilde{A} = A + E \in \mathbb{R}^{m \times n}$, for $m>n$, with $\|E\|_F\leq \epsilon$ for some $\epsilon > 0$ and $ C_1^{low}\leq \sigma_1(A) \leq C_1^{up}$. If A is singular and $C_1^{low}-\epsilon > 0$,
    \begin{equation}
     \rho(\tilde{A}) \leq \frac{\|E\|_F}{\sigma_1(A)-\|E\|_F} \leq \frac{\epsilon}{ C_1^{low}-\epsilon}.
     \label{eq:singular_matrix_upper_bound}
\end{equation}
 If A is non-singular, i.e., $\sigma_n(A)\geq C_n > 0$ and $C_n-\epsilon > 0$,
\begin{equation}
     \rho(\tilde{A}) \geq \frac{\sigma_n(A) -\|E\|_F}{\sigma_1(A)+\|E\|_F}\geq \frac{C_n-\epsilon}{C_1^{up}+\epsilon}.
     \label{eq:nonsingular_matrix_lower_bound}
\end{equation}    
\end{corollary}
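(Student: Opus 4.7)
The plan is to apply Theorem~\ref{theorem:mirksy} (Weyl/Mirsky) separately to the largest and smallest singular values of $\tilde{A}=A+E$, and then combine the resulting inequalities to bound the ratio $\rho(\tilde{A}) = \sigma_n(\tilde{A})/\sigma_1(\tilde{A})$ from above or below depending on whether $A$ is singular. Since the hypotheses already ensure the relevant denominators are strictly positive, the proof should reduce to elementary monotonicity of the map $t \mapsto t/s$ in each argument.

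First I would treat the singular case. From Theorem~\ref{theorem:mirksy} applied with $k=n$ and $\sigma_n(A)=0$, one obtains $\sigma_n(\tilde{A}) \leq \|E\|_F$. Applied with $k=1$, it gives $\sigma_1(\tilde{A}) \geq \sigma_1(A) - \|E\|_F$, which is strictly positive by the assumption $C_1^{low} - \epsilon > 0$. Dividing the two bounds yields the first inequality in \eqref{eq:singular_matrix_upper_bound}. For the second inequality, I would use that the right-hand side $x / (\sigma_1(A)-x)$ is monotonically increasing in $x$ on the relevant range and that $\sigma_1(A) \geq C_1^{low}$, so replacing $\|E\|_F$ by its upper bound $\epsilon$ in the numerator and $\sigma_1(A)$ by its lower bound $C_1^{low}$ in the denominator only weakens the estimate.

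Next I would handle the non-singular case analogously. Theorem~\ref{theorem:mirksy} with $k=n$ gives $\sigma_n(\tilde{A}) \geq \sigma_n(A) - \|E\|_F \geq C_n - \epsilon > 0$, while $k=1$ gives $\sigma_1(\tilde{A}) \leq \sigma_1(A) + \|E\|_F \leq C_1^{up} + \epsilon$. Taking the quotient produces the first inequality in \eqref{eq:nonsingular_matrix_lower_bound}, and substituting the uniform bounds $C_n$ and $C_1^{up}$ for $\sigma_n(A)$ and $\sigma_1(A)$ yields the second.

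There is no real obstacle; the only points requiring care are bookkeeping of the sign conditions that keep all denominators positive (which is precisely what the conditions $C_1^{low}-\epsilon>0$ and $C_n-\epsilon>0$ are for) and verifying the monotonicity used when replacing $\sigma_1(A)$ and $\|E\|_F$ by their respective bounds, so that passing from the first to the second inequality in each chain is legitimate.
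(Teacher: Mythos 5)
Your proposal is correct and follows essentially the same route as the paper: apply Theorem~\ref{theorem:mirksy} to $\sigma_1$ and $\sigma_n$ separately, take the quotient, and then pass to the uniform constants by monotonicity. In fact your write-up is more careful than the paper's own two-line proof (which even contains a sign typo, stating $\sigma_k(\tilde{A})\geq\|E\|_F-\sigma_k(A)$ where $\sigma_k(A)-\|E\|_F$ is meant), so there is nothing to add.
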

\begin{proof}
    Theorem \ref{theorem:mirksy} implies that 
        $ \|E\|_F + \sigma_k(A) \geq\sigma_k(\Tilde{A}) \geq \|E\|_F - \sigma_k(A)$, for $k\in \{1,...,n\}$.
     If A is singular, i.e., $\sigma_n(A)=0$, and $C_1^{low}-\epsilon > 0$, the previous inequalities imply \ref{eq:singular_matrix_upper_bound}. Similarly, the previous equation implies \ref{eq:nonsingular_matrix_lower_bound}.
\end{proof}
\subsection{Linear PDEs} \label{sec:nr-franco}
Consider a continuously differentiable function $u:\mathbb{R}  \times \mathbb{R}^n\rightarrow \mathbb{R}, (t,x) \mapsto u(t,x) $. In the following we denote the spatial derivatives of $u$ at $x$ as $u_{\alpha}=\partial_{\alpha_1} ... \partial_{\alpha_n}u$, where $\alpha = (\alpha_1, ... , \alpha_n) \in \mathbb{N}^n$. Let $G=(u_{\alpha_1}|...|u_{\alpha_{n}}) \in \mathbb{R}^{m\times n} $ with $m>n$.
Assume, we are given $\tilde{u} = u + \epsilon$ at data points $\{(t_i, x_j)\}_{i, j = 1}^m$ and construct $\tilde{G}=(\tilde{u}_{\alpha_1}|...|\tilde{u}_{\alpha_n})\in \mathbb{R}^{m\times n}$ via central FD.
The bound $e_G$ on the Frobenius norm of $G$
\begin{align}
\label{eq:frobenius_ggtilde_upperbound}
\|G-\tilde{G}\|^2_F \leq\|u-\tilde{u}\|^2_2+\sum_{i=2}^{n} m \|u_{\alpha_i}-\tilde{u}_{\alpha_i}\|_{\infty}^2 \leq \epsilon_{G}
\end{align}
can be computed using $e_{fd}$, since we can bound $\|u_{\alpha_i}-\tilde{u}_{\alpha_i}\|_{\infty}$ with $e_{fd}$.
The following theorem can be used to classify unique and non-unique linear PDEs and is the mathematical foundation for NR-FRanCo.
\begin{theorem}
\label{theorem:sfranco_unique_class}
Let  $C_1^{low}\leq \sigma_1(G) \leq C_1^{up}$ and $\|G-\tilde{G}\|^2_F \leq \epsilon_{G}$.
If a linear PDE is not unique for $G=(u_{\alpha_1},...,u_{\alpha_{n}})$ and $C_1^{low}>\epsilon_{G}$, then
\begin{equation}
    \rho(\Tilde{G}) \leq \frac{\epsilon_{G}}{ C_1^{low}-\epsilon_{G}}
\label{eq:sfranco_nonunique_bound}
\end{equation}
holds. 
If a linear PDE is unique for $G$ and $\sigma_n(G)\geq C_n>\epsilon_{G}$, then the following holds
\begin{equation}
     \rho(\Tilde{G})  \geq \frac{C_n-\epsilon_{G}}{C_1^{up}+\epsilon_{G}}.
\label{eq:sfranco_unique_bound}
\end{equation}
\end{theorem}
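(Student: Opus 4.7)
The statement is, essentially, a restatement of Corollary \ref{cor:sing_nonsing_bounds} in the language of PDE uniqueness, so the plan is to reduce it to that corollary via the background equivalence recalled after the definition of uniqueness. Concretely, fact (1) from the background says that a linear PDE is unique for $G=(u_{\alpha_1}\mid\cdots\mid u_{\alpha_n})$ if and only if the columns of $G$ are linearly independent, i.e., $G$ has full column rank. Since $G\in\mathbb{R}^{m\times n}$ with $m>n$, this is equivalent to $\sigma_n(G)>0$; non-uniqueness is therefore equivalent to $\sigma_n(G)=0$, i.e., to $G$ being singular in the sense of Corollary \ref{cor:sing_nonsing_bounds}.

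My first step is to put the hypotheses in the form required by that corollary. I set $E := \tilde{G}-G$, so $\tilde{G}=G+E$, and I use (\ref{eq:frobenius_ggtilde_upperbound}) to get a Frobenius bound on $E$ in terms of $\epsilon_G$. The matching $\sigma_1(G)\in[C_1^{low},C_1^{up}]$ hypothesis is already assumed.

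In the non-unique case, the background equivalence gives $\sigma_n(G)=0$, so $G$ is singular; together with $C_1^{low}>\epsilon_G$ this matches exactly the hypothesis of the singular branch of Corollary \ref{cor:sing_nonsing_bounds}. Plugging into (\ref{eq:singular_matrix_upper_bound}) and using $\sigma_1(G)\geq C_1^{low}$ yields (\ref{eq:sfranco_nonunique_bound}). In the unique case, full column rank of $G$ plus $\sigma_n(G)\geq C_n>\epsilon_G$ matches the non-singular branch: inserting the bounds $\sigma_1(G)\leq C_1^{up}$ and $\sigma_n(G)\geq C_n$ into (\ref{eq:nonsingular_matrix_lower_bound}) produces (\ref{eq:sfranco_unique_bound}).

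There is no real mathematical obstacle here; the only delicate point is a notational one. Inequality (\ref{eq:frobenius_ggtilde_upperbound}) is written for the \emph{squared} Frobenius norm $\|G-\tilde{G}\|_F^2$, while Corollary \ref{cor:sing_nonsing_bounds} is stated for $\|E\|_F$. To keep the final bounds as in (\ref{eq:sfranco_nonunique_bound})--(\ref{eq:sfranco_unique_bound}), I would read $\epsilon_G$ as the bound on $\|G-\tilde{G}\|_F$ itself (treating the square in (\ref{eq:frobenius_ggtilde_upperbound}) as a typo); otherwise each occurrence of $\epsilon_G$ in the conclusion should be replaced by $\sqrt{\epsilon_G}$. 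Beyond this bookkeeping, the proof is a two-line invocation of Corollary \ref{cor:sing_nonsing_bounds} in each of the two cases.
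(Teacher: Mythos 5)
Your proof matches the paper's own argument exactly: reduce uniqueness of the linear PDE to full column rank of $G$ via the background equivalence, then invoke Corollary~\ref{cor:sing_nonsing_bounds} on each branch. Your observation about the squared Frobenius norm in (\ref{eq:frobenius_ggtilde_upperbound}) versus the unsquared norm in the corollary is a genuine notational inconsistency in the paper that your reading resolves correctly.
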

\begin{proof}
By \citet{scholl2023welldefinedness} a linear PDE is unique iff the rank of $G$ is full. Then the proof follows directly from Corollary~\ref{cor:sing_nonsing_bounds}.
\end{proof}
Noise robust FRanCo (NR-FRanCo) uses Equations~\ref{eq:sfranco_nonunique_bound} or \ref{eq:sfranco_unique_bound} to exclude either the possibility that the PDE is non-unique or 
unique by computing the threshold and $\rho$. While Theorem~\ref{theorem:sfranco_unique_class} considers deterministically bounded noise, it can be directly extended to probabilistic bounds.
\subsection{Algebraic and analytic PDEs} \label{sec:nr-jrc}
Similar to before, we use $\rho$ to identify the uniqueness of algebraic and analytic PDEs with NR-JRC. We define $G$ and $\tilde{G}$ as in the previous section and bound the noise on the Jacobian of $G$ with
$\|J_G - J_{\tilde{G}}\|^2_F \leq \epsilon_{J_G}$, which can again be directly computed using $e_{fd}$.
The following theorem lays the mathematical groundwork for NR-JRC, by providing thresholds for identifying (non-)uniqueness depending on the error of the Jacobians $e_{J_G}$. 

\begin{theorem}
\label{theorem:jrc_unique_class}
Let  $C_1^{low}\leq \sigma_1(J_G) \leq C_1^{up}$ and $\|J_G-J_{\tilde{G}}\|^2_F \leq \epsilon_{J_G}$.
If an analytic PDE is not unique for a continuously differentiable function $u$ or an algebraic PDE is not unique for an algebraic function $u$, $G=(u_{\alpha_1},...,u_{\alpha_{n}})$, and $C_1^{low}>\epsilon_{J_G}$, then 
\begin{equation}
    \rho(J_{\Tilde{G}}) \leq \frac{\epsilon_{J_G}}{ C_1^{low}-\epsilon_{J_G}}
\label{eq:jrc_nonunique_bound}  
\end{equation}
holds for all data points.
If an algebraic PDE is unique for an algebraic function $u$, $G$ as above, and $\sigma_n(J_G)\geq C_n>\epsilon_{J_G}$ then for at least one data point it holds that
\begin{equation}
     \rho(\Tilde{J_G})  \geq \frac{C_n-\epsilon_{J_G}}{C_1^{up}+\epsilon_{J_G}}.
\label{eq:jrc_unique_bound}  
\end{equation}
\end{theorem}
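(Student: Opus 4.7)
The proof strategy mirrors that of Theorem \ref{theorem:sfranco_unique_class}: I would simply apply Corollary \ref{cor:sing_nonsing_bounds} \emph{pointwise} to the Jacobian $J_G$ in place of $G$, after invoking the rank-based characterisation of uniqueness for algebraic and analytic PDEs from \citet{scholl2023welldefinedness} recalled in the Background. That characterisation states that an algebraic PDE (for algebraic $u$) or an analytic PDE (for continuously differentiable $u$) is unique for $G$ iff there exists at least one $(t,x)$ at which $J_G(t,x)$ has full column rank; equivalently, in the non-unique case $J_G(t,x)$ is rank-deficient at \emph{every} point, in particular at every data point.

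For the non-uniqueness bound, fix an arbitrary data point $(t,x)$. Since $J_G(t,x) \in \mathbb{R}^{m\times n}$ is singular there, I would set $A = J_G(t,x)$ and $E = J_{\tilde{G}}(t,x) - J_G(t,x)$ in Corollary \ref{cor:sing_nonsing_bounds}; the hypothesis $\|J_G-J_{\tilde{G}}\|^2_F \leq \epsilon_{J_G}$ controls $\|E\|_F$, and the assumption $C_1^{low} > \epsilon_{J_G}$ matches the Corollary's requirement $C_1^{low}-\epsilon>0$. The singular-matrix upper bound \eqref{eq:singular_matrix_upper_bound} immediately yields \eqref{eq:jrc_nonunique_bound} at the chosen data point, and since the point was arbitrary, the bound holds for all data points, as claimed.

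For the uniqueness bound in the algebraic case, the hypothesis supplies a data point at which $\sigma_n(J_G) \geq C_n > \epsilon_{J_G}$, i.e., $J_G$ is non-singular there with quantitative control on its smallest singular value. At that specific data point I would apply the non-singular lower bound \eqref{eq:nonsingular_matrix_lower_bound} of Corollary \ref{cor:sing_nonsing_bounds} to $A = J_G$ and $E = J_{\tilde{G}} - J_G$, which yields \eqref{eq:jrc_unique_bound} directly. I expect no genuine obstacle: the conceptual content is packaged into (a) the rank characterisation cited from \citet{scholl2023welldefinedness} and (b) Corollary \ref{cor:sing_nonsing_bounds}. The main bookkeeping points are distinguishing the global nature of the non-uniqueness conclusion (the bound holds at every data point because singularity of $J_G$ holds globally) from the pointwise nature of the uniqueness conclusion (the bound is only guaranteed at the single data point where the non-singularity hypothesis is placed), and the minor notational clash between $\|J_G-J_{\tilde{G}}\|_F^2$ and $\epsilon_{J_G}$, which is inherited unchanged from Theorem \ref{theorem:sfranco_unique_class}.
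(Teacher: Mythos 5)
Your proposal is correct and follows essentially the same route as the paper's own proof: invoke the rank characterisation of (non-)uniqueness from \citet{scholl2023welldefinedness} and then apply Corollary \ref{cor:sing_nonsing_bounds} pointwise to $J_G$, at every data point in the non-unique case and at the single witnessing data point in the unique algebraic case. If anything, your write-up is more careful than the paper's, since it makes explicit which direction of the rank characterisation is used where (only the ``full rank somewhere $\Rightarrow$ unique'' direction is available for analytic PDEs) and cleanly separates the global versus pointwise nature of the two conclusions.
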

\begin{proof}
By \citet{scholl2023welldefinedness} if the Jacobian $J_G$ has full rank for at least one data point the analytic PDE is unique. Corollary \ref{cor:sing_nonsing_bounds} implies Equation \ref{eq:jrc_unique_bound}.
If the Jacobian does not have full rank for all data points and the PDE is algebraic, the PDE is not unique \citep{scholl2023welldefinedness}. Consequently, Corollary \ref{cor:sing_nonsing_bounds} implies Equation \ref{eq:jrc_nonunique_bound}.
\end{proof}

Note that the implication only holds for one direction for analytic PDEs as proven in \citet{scholl2023welldefinedness}. As mentioned for Theorem~\ref{theorem:sfranco_unique_class}, Theorem~\ref{theorem:jrc_unique_class} can also be directly extended to probabilistically bounded noise. Noise robust JRC (NR-JRC) applies the thresholds provided in Equations~\ref{eq:jrc_nonunique_bound} and \ref{eq:jrc_unique_bound} to decide if an algebraic/analytic PDE is unique or not unique, similarly to NR-FRanCo. 


\section{Experiments}
\label{sec:experiments}
In this section, we present the empirical results of NR-FRanCo and NR-JRC in various, simulated experiments, in which we add Gaussian noise with standard deviation $\alpha^2\|u\|_2^2$, where $\alpha$ is the noise level. 
In all experiments we fix $G=(u,\partial_x u)$ and denote 
with $\tilde{\sigma}_i$ the SVs of $\tilde{G}$ for NR-FRanCo
and of $J_{\tilde{G}}$ for NR-JRC.
Furthermore, we set the parameters $C = 10 ^{-4}$, $C_1^{low}= 0.5 \tilde{\sigma}_1$, $C_1^{up}= 1.5 \tilde{\sigma}_1$, and $C_n=\max(C\tilde{\sigma}_1,0.5\tilde{\sigma}_n)$ throughout the experiments.
Moreover, we compute $C_u$ by FD as $C_u = \max_{k = 1,..,l}(\|\tilde{u}^{n+k}\|_{\infty})$ and assume $C_{\xi} = 1$.\footnote{The code can be found in \href{https://github.com/HillaryHauger/robust-identifiability-symbolic-recovery}{https://github.com/HillaryHauger/robust-identifiability-symbolic-recovery}.} Table \ref{tab:linear_pdes} and \ref{tab:jrc_pdes} specify the PDEs used in the experiments.
\begin{table}[]
\small
    \centering
    \begin{tabular}{c|c|c}
            & Function & PDE \\
         a) & $\exp(ax+t)$ & $\partial_t u=au=a\partial_x u$ \\
         b) & $\cos(x-at)$ & $\partial_t u=-a\partial_x u$    \\
         c) & $\sin(x-at)$ & $\partial_t u=-a\partial_x u$ \\
         d) & $(x+bt)\exp(at)$ & $\partial_t u=au+b\partial_x u$ \\
    \end{tabular}
    \caption{Linear PDEs used for performing experiments with NR-FRanCo.}
    \label{tab:linear_pdes}
\end{table}

In Figure \ref{fig:nr_franco_results_one} one can see the results of NR-FRanCo for the non-unique PDE in Table \ref{tab:linear_pdes}a. The classification works well for noise levels $0$ and $10^{-8}$. Furthermore, it is visible that with higher FD order the hypotheses become more certain.
Figure~\ref{fig:nr_franco_results_all} showcases the results of NR-FRanCo for different noise levels and FD orders for multiple PDEs. This shows that classifying a unique PDE with noise is much more robust than classifying a non-unique PDE with noise. 
In Figure \ref{fig:nr_franco_results_all} the classification for FD orders strictly bigger than 4 works until noise level $10^{-5}$. For uniqueness, the classification does not fail until noise level $10^{-1}$.

\begin{figure}[htb]
\centering
\begin{subfigure}[b]{\textwidth}
  \centering
  \centerline{\includegraphics[width=0.5\textwidth]{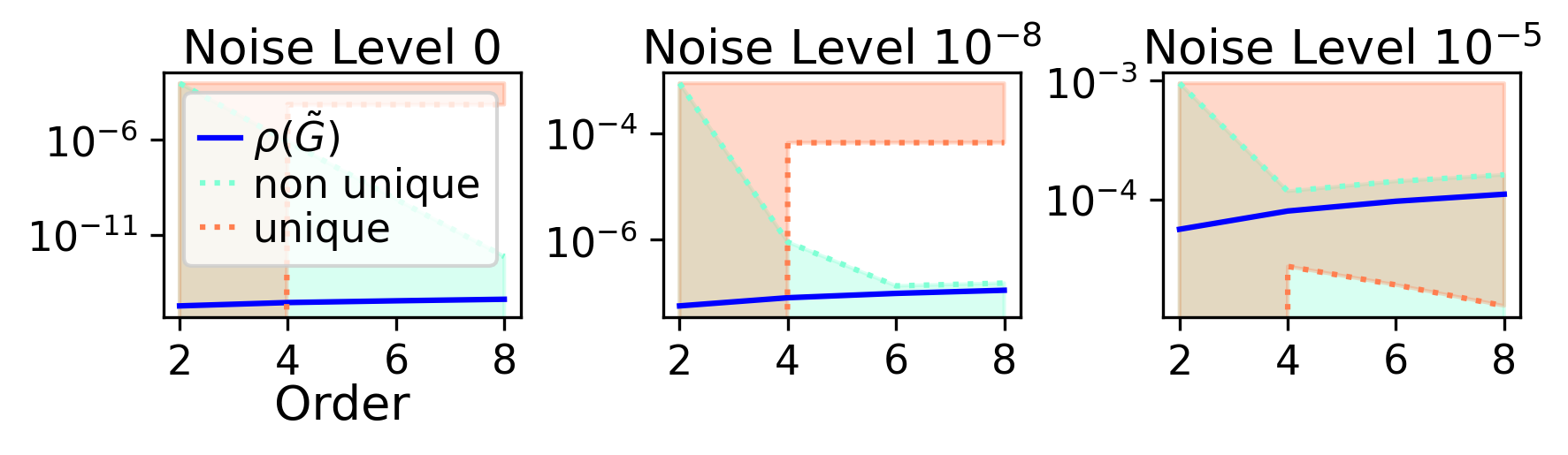}}
  \caption{NR-FRancO results for the PDE from Table \ref{tab:linear_pdes}a).}
  \label{fig:nr_franco_results_one}
\end{subfigure}
\centering
\begin{subfigure}[b]{\textwidth}
\centering
  \centerline{\includegraphics[width=0.5\textwidth]{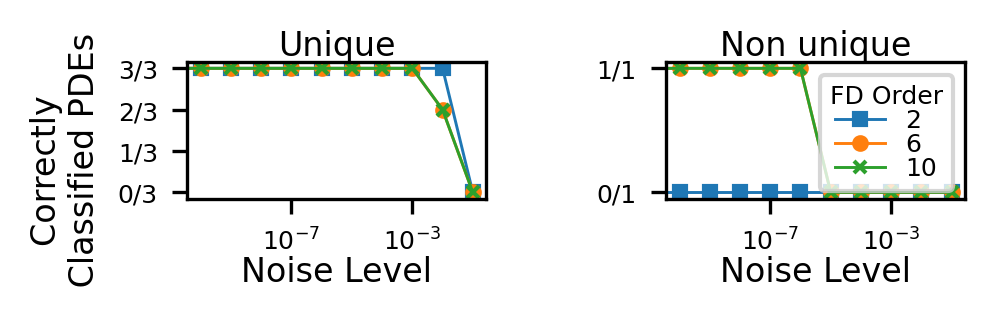}}
  \caption{NR-FRancO results averaged over all PDEs from Table \ref{tab:linear_pdes}.}
  \label{fig:nr_franco_results_all}
\end{subfigure}
\caption{ Figure \ref{fig:nr_franco_results_one} shows NR-FRanCo results for a non-unique PDE and three noise levels. The blue line is $\rho(\tilde{G})$ for FD orders 2, 4, 6, and 8. Red and green lines are bounds from Theorem \ref{theorem:sfranco_unique_class}. $\rho(\tilde{G})$ in the red segment indicates uniqueness, in the green segment non-uniqueness, and in both segments means no definitive conclusion. Figure \ref{fig:nr_franco_results_all} shows how many PDEs were correctly classified by NR-FRanCo for different orders and noise levels $0,10^{-10}, ..., 10^{-1}$ for the PDEs in Table \ref{tab:linear_pdes}.}
\end{figure}


\begin{table}
    \centering
    
    \begin{tabular}{c|c|c}
            & Function & PDE \\
         a) & $(x+t)^{-1}$ & $\partial_t u=\partial_x u=-u^2$ \\
        b)& $(x+t)^{-\frac{1}{2}}$ & $\partial_t u=\partial_x u=-\frac{1}{2}u^2$\\
         c) & $\left(x + t\right) \arccos\left( \cosh(a t)^{-1} \right)$ & $\partial_t u=\partial_x u+\frac{u}{\partial_x u}\cos(u)$  \\
         d) & $\left(x + t\right)  \arcsin\left(\cosh(a t)^{-1}\right)
$ & $\partial_t u=\partial_x u-\frac{u}{\partial_x u}\sin(u)$   \\
    \end{tabular}
    \caption{Algebraic and analytic PDEs used for experiments with NR-JRC.}
    \label{tab:jrc_pdes}
\end{table}
Figure \ref{fig:nr_jrc_results_one} presents the results of NR-JRC for a non-unique algebraic PDE  with FD order 8. It shows that the classification of non-unique PDEs is very unstable and already fails at the low noise level of $10^{-8}$. In Figure \ref{fig:nr_jrc_results_all} the results for NR-JRC applied to several different PDEs are visualized. The uniqueness classification is more robust than the non-uniqueness classification: it classifies all unique PDEs correctly up to noise level $10^{-6}$.
The non-unique classification works up till noise level $10^{-9}$ for both PDEs if the FD order is at least 4.
\begin{figure}[htb]
\begin{minipage}[b]{1.0\textwidth}
  \centering
  \centerline{\includegraphics[width=0.5\textwidth]{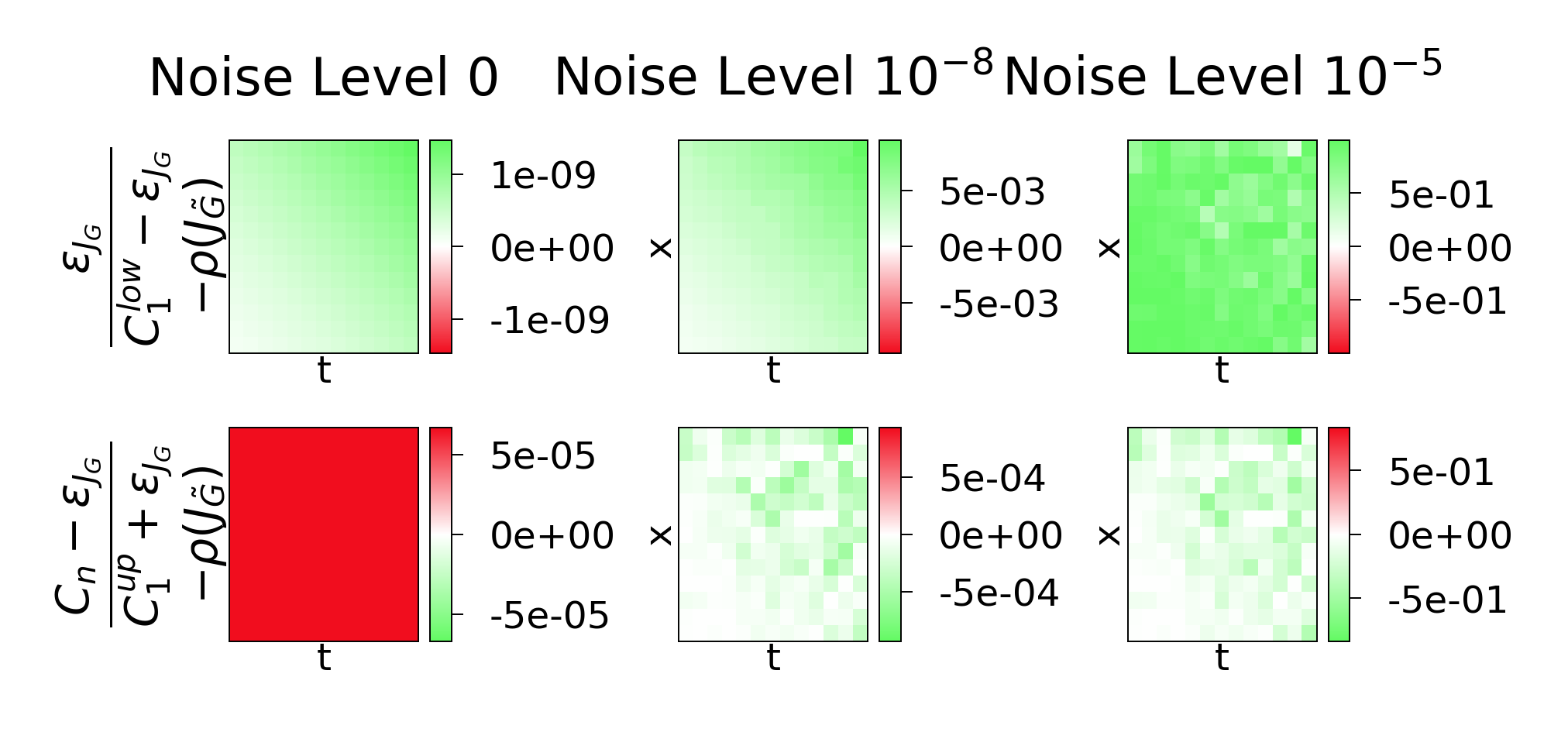}}
  \subcaption{JRC results for the PDE from Table \ref{tab:jrc_pdes}a).}
  \label{fig:nr_jrc_results_one}
\end{minipage}
\begin{minipage}[b]{1.0\textwidth}
  \centering
  \centerline{\includegraphics[width=0.5\textwidth]{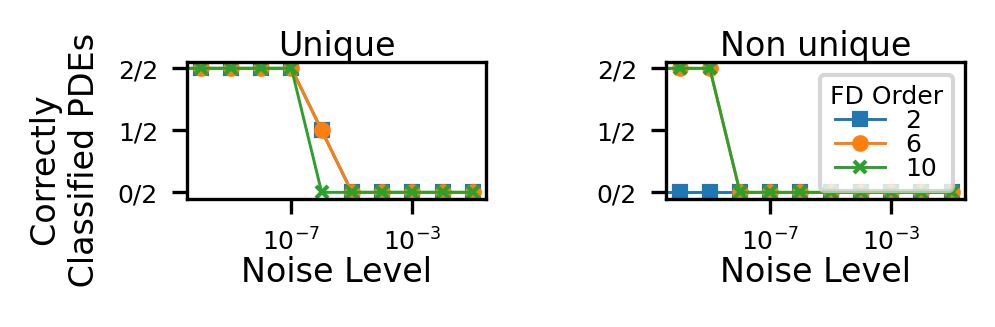}}
  \subcaption{JRC results averaged over all PDEs from Table \ref{tab:jrc_pdes}.}
    \label{fig:nr_jrc_results_all}
\end{minipage}%
\caption{Figure \ref{fig:nr_jrc_results_one} shows NR-JRC results for three noise levels on the PDE in Table \ref{tab:jrc_pdes}a). It subtract $\rho(J_{\tilde{G}})$ from the thresholds from Theorem~\ref{theorem:jrc_unique_class}. The PDE is non-unique if the upper plot is green and the lower plot is red, and vice versa for unique PDEs. If both plots have the same color, no conclusion can be made. Figure \ref{fig:nr_jrc_results_all} shows how many PDEs were correctly classified by the NR-JRC for different orders and noise levels $0,10^{-10}, ..., 10^{-1}$ for all PDEs specified in Table \ref{tab:jrc_pdes}.}
\end{figure}


\section{Discussion}
Interestingly, the NR-JRC method is significantly less stable than NR-FRanCo. This instability arises because NR-JRC requires calculating more derivatives, as it uses the Jacobian of $G$. Consequently, FD introduces much larger errors. These errors can severely impact the classification of the uniqueness of PDEs. 
Furthermore, one major conclusion is that the classification of unique PDEs is much more robust than of non-unique PDEs. This stems from the fact, that the smallest SV is very sensitive to small noise perturbations for non-uniqueness, as shown in Theorem \ref{theorem:mirksy}. 
Another crucial aspect is that while one often cannot determine the uniqueness of the solution, this issue is not necessarily a flaw in the algorithm but rather a reflection of the problem's ill-posedness. In certain cases, for instance, when the noise level is too high, one can not decide mathematically whether the PDE is unique or not.
One limitation of our method is that the bounds specified in Theorems \ref{theorem:sfranco_unique_class} and \ref{theorem:jrc_unique_class}, depend on constants possibly unknown in practice ($C_1^{low}$, $C_1^{up}$, $C_n$, $C_u$ and $C_{\xi}$), which have to be estimated.
\section{Conclusion}
\label{sec:conlusion}
We prove theoretical bounds to determine the (non-)uniqueness of PDEs and propose methods for identifying the uniqueness of PDEs in the presence of noise based on these results. These methods can also identify situations where a decision cannot be made due to excessive noise. We show empirically that they can be successfully used on examples for which the previous methods would have failed. 

\bibliographystyle{plainnat}  
\bibliography{references}  

\end{document}